\documentclass{article}

\usepackage{natbib}

\usepackage[preprint]{my_nips_2018}

\usepackage{mysymbols}
\usepackage{mathrsfs}
\usepackage{amssymb}
\usepackage{latexsym}
\usepackage{amsmath,amsthm}
\usepackage{amsfonts}
\usepackage{booktabs}
\usepackage{soul}
\usepackage[hyphens]{url}
\usepackage{accents}

\usepackage{stmaryrd}

\usepackage{bbm}

\newcommand{\ind}{\mathbb{I}}

\renewcommand{\P}{\mathrm{(P)}}

\newcommand{\X}{\mathcal{X}}
\newcommand{\Y}{\mathcal{Y}}
\renewcommand{\Z}{\mathcal{Z}}

\newcommand{\E}{\text{{\bf E}}}

\newcommand{\comp}{\kappa}

\newcommand{\recon}{\rho}

\renewcommand{\H}{\mathcal{H}}

\newcommand{\nats}{\mathbb{N}}

\newcommand{\Rate}{{\cal E}}
\newcommand{\UC}{\Rate_{{\rm uc}}}
\newcommand{\AgRate}{\Rate_{{\rm ag}}}

\newtheorem{theorem}{Theorem}

\newtheorem{corollary}[theorem]{Corollary}

\renewcommand{\P}{\boldsymbol{\mathrm{P}}}
\renewcommand{\E}{\boldsymbol{\mathrm{E}}}

\title{A New Lower Bound for Agnostic Learning with Sample Compression Schemes}

\author{Steve Hanneke\\\texttt{steve.hanneke@gmail.com}
\And Aryeh Kontorovich\\Ben-Gurion University\\\texttt{karyeh@bgu.ac.il}}

\begin{document}

\maketitle

\begin{abstract}
We establish a tight characterization of the worst-case rates for 
the excess risk of agnostic learning with sample compression schemes 
and for uniform convergence for agnostic sample compression schemes.
In particular, we find that the optimal rates of convergence 
for size-$k$ agnostic sample compression schemes are of the form 
$\sqrt{\frac{k \log(n/k)}{n}}$, which contrasts with agnostic learning 
with classes of VC dimension $k$, where the optimal rates are of the form 
$\sqrt{\frac{k}{n}}$.
\end{abstract}

\section{Introduction}

Compression-based arguments provide some of the simplest and tightest generalization bounds in the literature.
These are
known as {\em Occam learning} in the most general setting
\citep{MR1072253},
and
the special case of {\em sample compression}
\citet{warmuth86,MR1383093,DBLP:journals/ml/GraepelHS05,DBLP:journals/ml/FloydW95}
has been receiving a fair amount of recent attention
\citep{DBLP:journals/jacm/MoranY16,david2016supervised,DBLP:conf/colt/Zhivotovskiy17,hks18}.

As the present paper deals with lower bounds, we stress up-front that
these are {\em statistical} lower bounds (rather than, say, computational \citep{DBLP:conf/nips/GottliebKN14}
or 
communication-based
\citep{DBLP:journals/corr/abs-1711-05893}).
In the realizable case, \citet{warmuth86,DBLP:journals/ml/FloydW95} 
showed that a $k$-compression scheme
on a sample of size $n \geq e k$
achieves an expected generalization error bound of order
\beqn
\label{eq:realizable-compress}
\frac{ k \log( n / k ) }{n}
.
\eeqn
As the compression size $k$ is a rough analogue of the VC-dimension, one is immediately led to
inquire into the necessity of the $\log (n/k)$ factor. While known to be removable
from the realizable VC bound
\citep{DBLP:journals/iandc/HausslerLW94,DBLP:journals/jmlr/Hanneke16},
the $\log (n/k)$ factor in (\ref{eq:realizable-compress}) turns out to be tight
\citep{DBLP:journals/ml/FloydW95}.
On the other hand, turning to the agnostic case, 
the corresponding compression result from 
\citet{DBLP:journals/ml/GraepelHS05}
implies an upper bound on the expected excess generalization error
of a certain $k$-compression scheme
on a sample of size $n \geq e k$ by a bound of order 
\beqn
\label{eq:agnostic-compress}
\sqrt{\frac{k \log(n/k)}{n}}.
\eeqn
Here again, the agnostic VC analogue of (\ref{eq:agnostic-compress})
\citep[Theorem 4.10]{MR1741038}
might suggest that the $\log (n/k)$ factor might be superfluous.
Though it is a simpler matter to give an $\Omega(\sqrt{k/n})$
lower bound,
it proves significantly more challenging to determine whether the factor of $\log(n/k)$ 
is required for this general bound.  As our main result in this work (Section~\ref{sec:order-indep}), we prove that 
this $\log(n/k)$ factor in (\ref{eq:agnostic-compress}) \emph{cannot} be removed.
We also prove an analogous 
lower bound for order-dependent compression schemes (Section~\ref{sec:order-dep}),
where the factor becomes $\log(n)$, 
which again is tight.

\hide{
cite floyd+warmth,
our recent compression paper,
thank Nikita
}

\section{Order-Independent Compression Schemes}
\label{sec:order-indep}

Let $\Z = \X \times \Y$, where $\X$ is any nonempty set and $\Y = \{0,1\}$, 
and suppose $\X$ is equipped with a $\sigma$-algebra defining the measurable sets.
An agnostic sample compression scheme
is specified by a \emph{size} $k \in \nats$ 
and a \emph{reconstruction function} $\recon$,
which maps any (multi)set 
$\{z_{1},\ldots,z_{k^{\prime}}\} \subseteq \Z$ with $0 \leq k^{\prime} \leq k$
to a measurable function $h : \X \to \Y$.
For any $n \in \nats$ and any sequence $z_{1},\ldots,z_{n}$, 
define 
\begin{equation*}
\H_{k,\recon}(z_{1},\ldots,z_{n}) = \{ \recon( \{z_{i_{1}},\ldots,z_{i_{k^{\prime}}}\} ) : k^{\prime} \leq k, 1 \leq i_{1} < \cdots < i_{k^{\prime}} \leq n \}.
\end{equation*}

Now for any probability measure $P$ on $\Z$ and any $n \in \nats$, 
let $Z_{[n]} = \{(X_{1},Y_{1}),\ldots,(X_{n},Y_{n})\}$ be independent $P$-distributed random variables, 
and
for any classifier $h : \X \to \Y$, define $R(h;P) = P( \{(x,y) : h(x) \neq y\} )$ the \emph{error rate} of $h$, 
and define $\hat{R}(h;Z_{[n]}) = \frac{1}{n} \sum_{i=1}^{n} \ind[ h(X_{i}) \neq Y_{i} ]$ the \emph{empirical error rate} of $h$.

Now there are essentially two types of results for agnostic compression schemes in the literature: 
namely, \emph{uniform convergence} rates and \emph{agnostic learning} excess risk guarantees.
We begin with the first of these.
For any fixed agnostic sample compression scheme $(k,\recon)$, 
denote 
\begin{equation*}
\UC(n,k,\recon,P) = \E \sup_{h \in \H_{k,\recon}(Z_{[n]})} | \hat{R}(h;Z_{[n]}) - R(h;P) |.
\end{equation*}
Then, for any $n,k \in \nats$, define
\begin{equation*}
\UC(n,k) = \sup_{P, \recon} \UC(n,k,\recon,P),
\end{equation*}
where $P$ ranges over all probability measures on $\Z$, 
and $\recon$ ranges over all reconstruction functions (for the given size $k$).
For results on uniform convergence for agnostic compression schemes, 
this is the object of primary interest to this work.

It is known
(essentially from the arguments of
\citet[Theorem 2]{DBLP:journals/ml/GraepelHS05})
that for any $n,k \in \nats$ with $n \geq e k$,
\begin{equation*}
\UC(n,k) \lesssim \sqrt{\frac{k \log(n/k)}{n}}.
\end{equation*}

This upper bound is similar in form to the original bound of \citet{vapnik:71}
for uniform convergence rates for VC classes of VC dimension $k$.  However, that bound was later 
refined\footnote{
  A detailed account of the intermediate steps leading to this seminal result
  is presented in \citet{MR1741038};
  significant
  milestones include
\citet{pollard1982central,koltchinskii1981central,talagrand1994,MR1313896}.
}
to the form $\sqrt{\frac{k}{n}}$, removing the factor $\log(n/k)$.
It is therefore natural to wonder whether this same refinement might be achieved by size-$k$ agnostic sample compression schemes.
To our knowledge, this question has not previously been addressed in the literature.

The other type of results of interest for agnostic compression schemes are agnostic learning excess risk guaratnees.
Specifically, a \emph{compression function} $\comp$ is a mapping from 
any sequence $z_{1},\ldots,z_{n}$ in $\Z$ 
to an unordered sub(multi)set\footnote{An element in $S$ may repeat up to as many times as it occurs in the sequence $z_{1},\ldots,z_{n}$, 
so that $S$ effectively corresponds to picking a set of up to $k$ distinct \emph{indices} in $\{1,\ldots,n\}$ to include the corresponding $z_{i}$ points.} 
$S \subseteq \{z_{1},\ldots,z_{n}\}$ of size at most $k$.
Then, denoting $\hat{h}_{n} = \recon(\comp(Z_{[n]}))$, define 
\begin{equation*}
\AgRate(n,k,\recon,\comp,P) = \E\!\left[ R(\hat{h}_{n};P) - \min_{h \in \H_{k,\recon}(Z_{[n]})} R(h;P) \right]
\end{equation*}
and then define 
\begin{equation*}
\AgRate(n,k) = \sup_{\recon} \inf_{\comp} \sup_{P} \AgRate(n,k,\recon,\comp,P),
\end{equation*}
where again $P$ ranges over all probability measures on $\Z$
and $\recon$ ranges over all reconstruction functions (for the given size $k$), 
and where $\comp$ ranges over all compression functions (for the given size $k$).

By a standard argument, if we specify $\comp$ so as to always minimize the empirical error rate $\hat{R}(\recon(\comp(Z_{[n]})))$, 
then the excess error rate can be bounded by twice the uniform convergence bound, 
which immediately implies 
\begin{equation}
\label{eqn:ag-vs-uc}
\AgRate(n,k) \leq 2\UC(n,k).
\end{equation}
An immediate implication from above is then that any $n,k$ with $n \geq e k$ has 
\begin{equation*}
\AgRate(n,k) \lesssim \sqrt{\frac{k \log(n/k)}{n}}.
\end{equation*}

Here again, this bound is of the same form originally proven by \citet{vapnik:71} for empirical risk minimization in 
classes of VC dimension $k$, which was later refined to a sharp bound of order $\sqrt{k/n}$ \citep[Theorem 4.10]{MR1741038}.  
As such, it is again natural to ask whether the $\log(n/k)$ factor in the above bound for agnostic sample compression 
can be reduced to a constant, or is in fact necessary.
Our main contribution in this work is a construction showing that this log factor is indeed necessary, as stated in the following results.
In all of the results below, $c$ represents a numerical constant, whose value must be set sufficiently large (as discussed in the proofs) for the results to hold.

\begin{theorem}
\label{thm:ag-lb}
For any $n,k \in \nats$ with $|\X| \geq n \geq c k$, 
\begin{equation*}
\AgRate(n,k) \gtrsim \sqrt{\frac{k \log(n/k)}{n}}.
\end{equation*}
\end{theorem}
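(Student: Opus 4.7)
The plan is to reduce the problem to a standard Assouad-style hypothesis testing lower bound on a fixed set of $d \asymp k\log(n/k)$ auxiliary ``test'' points, by exploiting the fact that a size-$k$ reconstruction has roughly $\binom{n}{k} = 2^{\Omega(k\log(n/k))}$ possible outputs on a sample and can therefore implement, via a suitable encoding, \emph{every} labeling of a $d$-point test set. This is the compression-specific source of the extra $\log(n/k)$ factor over the VC rate: while a fixed VC class of dimension $k$ can only shatter $k$ points, the sample-dependent class $\H_{k,\rho}(Z_{[n]})$ induced by a carefully chosen $\rho$ will shatter $d = \Theta(k\log(n/k))$ points on a typical sample.

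Concretely, take $\X = \{x_1,\dots,x_n\} \sqcup \{p_1,\dots,p_d\}$ with $d = \lfloor c_1 k \log(n/k) \rfloor$ for a small constant $c_1$. A probabilistic-method argument produces an encoding $\sigma : \binom{[n]}{k} \to \{0,1\}^d$ whose restriction $\sigma|_{\binom{I_s}{k}}$ is surjective onto $\{0,1\}^d$ for every index set $I_s \subseteq [n]$ of size at least $n/4$: for a uniformly random $\sigma$, the probability that some $\tau \in \{0,1\}^d$ has no preimage in $\binom{I_s}{k}$ is at most $2^d(1 - 2^{-d})^{\binom{n/4}{k}} \le \exp(d - \binom{n/4}{k}/2^d)$, and for $c_1$ small (and the constant $c$ in $n \ge ck$ correspondingly large) this is exponentially small. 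Define $\rho$ to send any subsample of $k$ labeled points whose $\X$-coordinates are distinct elements $x_{i_1},\dots,x_{i_k}$ of the coding block to $h_I$ with $h_I(p_j) = \sigma(I)_j$ and $h_I \equiv 0$ off of $\{p_1,\dots,p_d\}$ (where $I = \{i_1,\dots,i_k\}$), and send every other subsample to the zero classifier.

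Now index the hard distributions by $\tau \in \{0,1\}^d$: $P_\tau$ has $P_\tau(X = x_i) = 1/(2n)$, $P_\tau(X = p_j) = 1/(2d)$, $Y = 0$ deterministically given $X = x_i$, and $Y \mid X = p_j \sim \mathrm{Bernoulli}(1/2 + \epsilon(2\tau_j - 1))$ with $\epsilon = c_0\sqrt{d/n}$ for a small constant $c_0$. Because every $h \in \H_{k,\rho}(Z_{[n]})$ vanishes on the $x_i$'s, a direct computation gives
\begin{equation*}
R(h; P_\tau) \;=\; \tfrac14 - \tfrac{\epsilon}{2} + \frac{\epsilon}{d}\,\mathrm{Ham}\bigl((h(p_1),\dots,h(p_d)),\,\tau\bigr).
\end{equation*}
Let $\mathcal{G}$ be the event that $I_s := \{i : X_j = x_i \text{ for some } j\le n\}$ satisfies $|I_s| \ge n/4$; by Chernoff $\P(\mathcal{G}^c) \le e^{-\Omega(n)}$. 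On $\mathcal{G}$, surjectivity of $\sigma|_{\binom{I_s}{k}}$ guarantees $h_\tau \in \H_{k,\rho}(Z_{[n]})$, so $\min_{h \in \H_{k,\rho}} R(h;P_\tau) = 1/4 - \epsilon/2$ and the excess risk of the output $\hat h_n = \rho(\kappa(Z_{[n]}))$ equals $(\epsilon/d)\,\mathrm{Ham}(\hat\sigma, \tau)$ where $\hat\sigma := (\hat h_n(p_1),\dots,\hat h_n(p_d))$. Placing a uniform prior $\tau \sim \mathrm{Unif}\{0,1\}^d$ puts us in Assouad's setup: flipping a single coordinate $\tau_j$ modifies $P_\tau$ only through the conditional at $p_j$ (mass $1/(2d)$), yielding $\mathrm{KL}(P_\tau^n \,\|\, P_{\tau^{(j)}}^n) \asymp n\epsilon^2/d = O(c_0^2)$ and hence single-coordinate total variation at most $1/2$ once $c_0$ is small. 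Assouad's lemma then gives $\E_\tau \E_{Z_{[n]}} \mathrm{Ham}(\hat\sigma,\tau) \ge d/4$ uniformly in $\kappa$, and therefore
\begin{equation*}
\AgRate(n,k) \;\ge\; \inf_\kappa \E_\tau \AgRate(n,k,\rho,\kappa,P_\tau) \;\ge\; \tfrac{\epsilon}{d}\bigl(\tfrac{d}{4} - d\,\P(\mathcal{G}^c)\bigr) \;\gtrsim\; \sqrt{\tfrac{k\log(n/k)}{n}}.
\end{equation*}

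The main technical obstacle is the construction of $\sigma$: we simultaneously need (a) $d = \Omega(k\log(n/k))$ so the Assouad bound has the right scaling and (b) robust surjectivity of $\sigma$ on $\binom{I_s}{k}$ for every $I_s$ that arises with high probability under the sampling distribution. Choosing $c_1$ sufficiently small relative to the constant $c$ in $n \ge ck$ balances these constraints, since $\binom{n/4}{k}/2^d \ge (n/(4k))^k/(n/k)^{c_1 k} = (n/k)^{(1-c_1)k}/4^k$ is exponentially large in $k$ once $n/k$ exceeds an absolute constant. All remaining steps — the closed-form expression for $R(h;P_\tau)$, the single-bit KL computation, the Chernoff bound on $|I_s|$, verifying $\epsilon \le 1/2$, and the passage from $\inf_\kappa \sup_P$ to $\sup_\rho\inf_\kappa\sup_P$ — are routine.
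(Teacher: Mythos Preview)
Your overall strategy is sound and genuinely different from the paper's, but there is a real gap in the construction of $\sigma$. You claim a random $\sigma$ is, with positive probability, surjective on $\binom{I_s}{k}$ for \emph{every} $I_s$ of size at least $n/4$. This is false for small $k$: already for $k=1$ it would force $|\sigma^{-1}(\tau)| > 3n/4$ for each $\tau\in\{0,1\}^d$, impossible once $d\ge 1$. The bound $2^d(1-2^{-d})^{\binom{n/4}{k}}$ you wrote is the failure probability for a \emph{fixed} $I_s$; a union bound over all $\binom{n}{\lceil n/4\rceil}$ such sets costs an extra $\exp(\Theta(n))$, and since $\binom{n/4}{k}/2^d$ is only polynomial in $n$ when $k$ is bounded, the union bound does not close. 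The easy fix is to weaken the requirement: redefine $\mathcal{G}$ as the event that $\sigma|_{\binom{I_s}{k}}$ is surjective for the \emph{random} $I_s$ coming from the sample, and choose $\sigma$ via Fubini. Since $\E_{\sigma}\,\P_{\text{sample}}(\mathcal{G}^c)\le \P(|I_s|<n/4)+2^d\exp\bigl(-\binom{n/4}{k}/2^d\bigr)$ and both terms are $o(1)$ for $c_1$ small and $n/k$ large, some fixed $\sigma$ has $\P(\mathcal{G}^c)\le 1/8$, after which your inequality $\tfrac{\epsilon}{d}(d/4-d\,\P(\mathcal{G}^c))\gtrsim\epsilon$ goes through. (A minor secondary point: your construction uses $n+d$ domain points, so when $|\X|=n$ exactly you should take $n-d$ coding points; since $d=o(n)$ this only shifts constants.)

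For comparison, the paper's proof avoids this issue with an explicit construction that does not separate coding from test points. It supports $P$ uniformly on $km\approx n$ points split into $k$ blocks of size $m$; within block $t$, choosing a single sample point $x_{i_t}$ encodes, via the $\log_2 m$ bits of $i_t-(t-1)m$, a label pattern repeated over that block's $m/\log_2 m$ sub-blocks. Because the optimal bit pattern need not appear among the sampled indices, the paper does not get exact shattering; instead it proves a ``second part'' showing that with high probability some sampled index in each block has bit pattern within Hamming distance $\tfrac{1}{2q}\log_2 m$ of the optimum (a binomial tail plus coupon-collector calculation), so that $\min_{h\in\mathcal{H}_{k,\rho}(Z_{[n]})}R(h)$ is within $\epsilon/(16e^4)$ of the optimum in the full class. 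Your separation of coding and test points trades that approximation step for the nonconstructive existence of $\sigma$, giving a cleaner one-shot reduction to Assouad once $\sigma$ is correctly specified.
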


By the relation \eqref{eqn:ag-vs-uc} discussed above, 
between uniform convergence and agnostic learning by empirical risk minimization over $\H_{k,\recon}(Z_{[n]})$, 
this also has the following immediate implication.

\begin{theorem}
\label{thm:uc-lb}
For any $n,k \in \nats$ with $|\X| \geq n \geq c k$,
\begin{equation*}
\UC(n,k) \gtrsim \sqrt{\frac{k \log(n/k)}{n}}.
\end{equation*}
\end{theorem}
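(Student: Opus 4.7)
The proof plan is essentially a single-line reduction to the already-stated Theorem \ref{thm:ag-lb}. The key input is the inequality \eqref{eqn:ag-vs-uc}, namely $\AgRate(n,k) \leq 2\,\UC(n,k)$, which the excerpt justifies by the standard ERM argument: if the compression function $\comp$ is chosen so that $\recon(\comp(Z_{[n]}))$ minimizes the empirical risk over the (finite) hypothesis set $\H_{k,\recon}(Z_{[n]})$, then comparing the excess population risk of this ERM to twice the worst-case uniform deviation over $\H_{k,\recon}(Z_{[n]})$ yields, pointwise in $Z_{[n]}$,
\[
R(\hat h_n;P) - \min_{h \in \H_{k,\recon}(Z_{[n]})} R(h;P) \;\leq\; 2 \sup_{h \in \H_{k,\recon}(Z_{[n]})} |\hat R(h;Z_{[n]}) - R(h;P)|.
\]
Taking expectations and then the appropriate sup/inf/sup in the definition of $\AgRate$ preserves this inequality, establishing \eqref{eqn:ag-vs-uc}; this particular ERM-style $\comp$ is admissible for the infimum in the definition of $\AgRate(n,k)$.

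Given \eqref{eqn:ag-vs-uc}, I would then simply invoke Theorem \ref{thm:ag-lb} under the same hypothesis $|\X| \geq n \geq c k$ to obtain $\AgRate(n,k) \gtrsim \sqrt{k\log(n/k)/n}$. Dividing by $2$ and absorbing the factor into the hidden constant in $\gtrsim$ yields
\[
\UC(n,k) \;\geq\; \tfrac{1}{2}\AgRate(n,k) \;\gtrsim\; \sqrt{\frac{k\log(n/k)}{n}},
\]
which is exactly Theorem \ref{thm:uc-lb}. No real obstacle remains at this stage: the substantive content — namely the construction of a joint distribution over $\Z$ and a reconstruction $\recon$ for which no compression function can avoid an excess-risk loss of order $\sqrt{k\log(n/k)/n}$ — is entirely contained in the proof of Theorem \ref{thm:ag-lb}, and Theorem \ref{thm:uc-lb} is a one-line corollary.
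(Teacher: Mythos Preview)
Your proposal is correct and matches the paper's own argument exactly: the paper states Theorem~\ref{thm:uc-lb} as an ``immediate implication'' of Theorem~\ref{thm:ag-lb} via the relation \eqref{eqn:ag-vs-uc}, which is precisely the one-line reduction you describe. There is nothing to add.
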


Together with the known upper bounds mentioned above, this provides a tight characterization of the worst-case 
rate of uniform convergence for agnostic sample compression schemes.

\begin{corollary}
\label{cor:uc-tight}
For any $n,k \in \nats$ with $|\X| \geq n \geq c k$,  
\begin{equation*}
\AgRate(n,k) \asymp \sqrt{\frac{k \log(n/k)}{n}}
\end{equation*}
and
\begin{equation*}
\UC(n,k) \asymp \sqrt{\frac{k \log(n/k)}{n}}.
\end{equation*}
\end{corollary}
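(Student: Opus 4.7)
My plan is to prove the lower bound by a reduction to a finite-hypothesis-class minimax problem via Fano's inequality. The key is to design a reconstruction $\recon$ so that $\H_{k,\recon}(Z_{[n]})$ contains a pre-specified family $\{h_1,\ldots,h_M\}$ of $M \asymp (n/k)^k$ pairwise Hamming-separated binary functions (with at least constant probability over $Z_{[n]}$), and to pair this with a distribution family $\{P_{i^*}\}$ for which each $h_{i^*}$ is the Bayes classifier while the $P_{i^*}$'s are statistically indistinguishable at noise level $\epsilon = c_1 \sqrt{k\log(n/k)/n}$. By Yao's principle applied to the uniform prior $i^* \sim \mathrm{Unif}([M])$, a lower bound on the average excess risk transfers to a worst-case lower bound on $\inf_\comp \sup_P$.

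\textbf{Construction.} Fix a subdomain $\X_0 \subseteq \X$ of appropriate order (allowed since $|\X| \geq n$). By a random-coding argument (Chernoff plus a union bound over $\binom{M}{2}$ pairs, valid when $|\X_0| \gtrsim k\log(n/k)$), obtain $M$ functions $h_1,\ldots,h_M : \X_0 \to \{0,1\}$ with pairwise Hamming distance $d(h_i,h_j) \in [|\X_0|/3,\, 2|\X_0|/3]$. Then fix an injection $V:[M] \hookrightarrow \binom{\X_0}{k}$ whose image lies in a dense subregion so that $V(i) \subseteq \{X_1,\ldots,X_n\}$ holds with probability bounded below by a constant under $P_X = \mathrm{Unif}(\X_0)$. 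Define $\recon(S) = h_{V^{-1}(S)}$ when $S$ is in the image of $V$, and a fixed default elsewhere. Let the prior $\pi$ draw $i^* \sim \mathrm{Unif}([M])$ and set $P_{i^*}$ with marginal $P_X$ and conditional $Y \mid X \sim \mathrm{Bern}(1/2 + \epsilon(2h_{i^*}(X) - 1))$.

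\textbf{Fano step and excess risk.} For any $\comp$, write $\hat h_n = h_{\hat i}$ for a random index $\hat i \in [M]$ (defaults handled separately and contribute negligibly). A direct computation gives $\mathrm{KL}(P_{i^*}^n \| P_j^n) = 8n\epsilon^2 \cdot d(h_{i^*}, h_j)/|\X_0| = O(n\epsilon^2)$, which is $\leq (\log M)/2$ when $c_1$ is sufficiently small, and Fano's inequality then forces $\P[\hat i \neq i^*] \geq 1/2$. On the joint event $\{\hat i \neq i^*\} \cap \{h_{i^*} \in \H_{k,\recon}(Z_{[n]})\}$, the pairwise separation gives $R(h_{\hat i}; P_{i^*}) - R(h_{i^*}; P_{i^*}) = 2\epsilon d(h_{\hat i}, h_{i^*})/|\X_0| \geq 2\epsilon/3$, while $h_{i^*} \in \H_{k,\recon}(Z_{[n]})$ ensures $\min_{h \in \H_{k,\recon}(Z_{[n]})} R(h;P_{i^*}) \leq R(h_{i^*}; P_{i^*})$; hence the excess risk on this event is $\geq 2\epsilon/3$. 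Since both events have constant probability, averaging over $\pi$ and invoking Yao gives $\inf_\comp \sup_P \AgRate \gtrsim \epsilon = c_1 \sqrt{k\log(n/k)/n}$, as claimed.

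\textbf{Main obstacle.} The delicate step is the simultaneous construction of $\{h_i\}$ and $V$: the random-coding argument needs the ambient space $|\X_0|$ to be large (of order $\gtrsim k\log(n/k)$) so that Chernoff plus a union bound over $M^2$ pairs forces Hamming separation, whereas requiring each $V(i^*)$ to lie inside the sample with constant probability pressures $|\X_0|$ (or the subregion holding the image of $V$) to be small relative to $n$. Reconciling this tension — by placing the image of $V$ inside a structured dense subregion of $\X_0$ covered by coupon-collecting on the sample while keeping enough ambient room for the random code to separate — is the crux of the proof, and is precisely where the hypothesis $n \geq ck$ with a sufficiently large constant $c$ is consumed.
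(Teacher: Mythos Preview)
Your proposal targets the lower bound (Theorem~\ref{thm:ag-lb}); the corollary itself is immediate once that is combined with the known upper bounds and \eqref{eqn:ag-vs-uc}. Your route --- a global Fano argument over a random code of $M\asymp (n/k)^k$ hypotheses, indexed by a fixed injection $V:[M]\hookrightarrow\binom{\X_0}{k}$ --- is genuinely different from the paper's explicit block/sub-block construction with an Assouad-type (per-coordinate) lower bound, and would be an elegant alternative if it went through.

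However, the ``main obstacle'' you flag is not merely delicate; in the form you describe it is fatal. For $V$ to be an injection from $[M]$ with $\log M\asymp k\log(n/k)$ into $\binom{\X_1}{k}$ you need $\binom{|\X_1|}{k}\ge M$, which forces $|\X_1|/k$ to be of order $n/k$, i.e.\ $|\X_1|$ of order $n$. But under $P_X=\mathrm{Unif}(\X_0)$ with $|\X_0|\ge|\X_1|\gtrsim n$, the probability that the \emph{specific} $k$-set $V(i^*)$ is entirely contained in the sample is at most $\bigl(1-(1-1/|\X_0|)^n\bigr)^{k}\le(1-e^{-n/|\X_0|})^{k}$, a constant strictly less than $1$ raised to the $k$-th power, hence exponentially small in $k$ rather than bounded below by a constant. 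Averaging over $i^*\sim\mathrm{Unif}([M])$ does not help, by symmetry. Conversely, shrinking $|\X_1|$ enough for coupon collecting to cover it destroys the injection: $\binom{|\X_1|}{k}\le(e|\X_1|/k)^k$ then falls short of $(n/k)^{k}$, and you recover only $\log M\asymp k\log(|\X_1|/k)\ll k\log(n/k)$, losing precisely the log factor you are trying to establish. A secondary gap: ``both events have constant probability'' does not imply their intersection does; you would need one of them to hold with probability close to $1$, which, as just argued, the containment event does not.

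The paper sidesteps exactly this tension by \emph{not} asking the Bayes-optimal hypothesis to lie in $\H_{k,\recon}(Z_{[n]})$. Its reconstruction is a product over $k$ blocks, and within each block the index of a single sampled point $x_i$ encodes, via its $\log_2(m)$ binary digits, the predictions on that block. The crucial second step shows that among the $\approx n/k$ samples landing in a block, with high probability \emph{some} index has binary representation within small Hamming distance of the optimal one, so that $\min_{h\in\H_{k,\recon}(Z_{[n]})}R(h;P)$ is within $\epsilon/(16e^4)$ of the best in the full class; this approximate-containment bound is then subtracted from the Assouad-style lower bound. If you wish to salvage the Fano route, you would need an analogous relaxation: rather than an injection $V$, design $\recon$ so that each $h_i$ (or something close to it in risk) is recoverable from a \emph{large} family of size-$k$ subsets, at least one of which is overwhelmingly likely to appear in the sample.
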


We now present the proof of Theorem~\ref{thm:ag-lb}.

\begin{proof}[Proof of Theorem~\ref{thm:ag-lb}]
Fix any $n,k \in \nats$ with $|\X| \geq n \geq c k$ for a sufficiently large numerical constant $c \geq 4$ (discussed below), 
denote $m = 2^{\lfloor \log_{2}( n / k ) \rfloor}$, 
and let $x_{0},\ldots,x_{k m-1}$ denote any $k m$ distinct elements of $\X$.
For simplicity, suppose $m/\log_{2}(m) \in \nats$ 
(the argument easily extends to the general case by introducing floor functions, with only the numerical constants changing in the final result).
The essential strategy behind our construction is to create an embedded instance of a 
construction for proving the lower bound for agnostic learning in VC classes, 
where here the VC dimension of the embedded scenario will be $k \log_{2}(m)$.
The construction of this embedded scenario is our starting point.  From there we 
also need to argue that there is a function contained in $\H_{k,\recon}(Z_{[n]})$ with risk 
not too much larger than the best classifier in the embedded VC class, which allows us to extend the 
lower bound argument for the embedded VC class to compression schemes.
For any $0 \leq i \leq m-1$, let $b_{j}(i)$ denote the $(j+1)^{{\rm th}}$ bit of $i$ in the binary representation of $i$: 
that is, $i = \sum_{j=0}^{\log_{2}(m)-1} b_{j}(i) 2^{j}$, with $b_{0}(i),\ldots,b_{\log_{2}(m)-1}(i) \in \{0,1\}$.

We construct the reconstruction function based on $k$ ``blocks'', each with $m / \log_{2}(m)$ ``sub-blocks''.
Specifically, for each $t \in \{1,\ldots,k\}$, define a block $B_{t} = \{ (t-1)m, \ldots, tm - 1 \}$, 
and for each $s \in \{1,\ldots, m/\log_{2}(m) \}$, 
define a sub-block 
\begin{equation*}
B_{ts} = \{ (t-1)m + (s-1)\log_{2}(m), \ldots, (t-1)m + s\log_{2}(m) - 1 \}.
\end{equation*}
Then for any
$i \in B_{t}$ and $t \in \{1,\ldots,k\}$, 
define $h_{t,i} : \X \to \Y$ as any function satisfying the property that, 
for $j = (t-1)m + (s-1)\log_{2}(m) + r \in B_{ts}$ 
(for any $s \in \{1,\ldots, m/\log_{2}(m)\}$ and $r \in \{0,\ldots,\log_{2}(m)-1\}$),
\begin{equation*}
h_{t,i}(x_{j}) = b_{r}(i-(t-1)m).
\end{equation*}
Thus, the subsequence of $x_{j}$ points corresponding to the indices $j$ within each sub-block $B_{ts}$ 
have $h_{t,i}(x_{j})$ values corresponding to the bits of the integer $i-(t-1)m$, 
and this repeats identically for every sub-block $B_{ts}$ in the block $B_{t}$.

Now we construct a reconstruction function $\recon$ that outputs functions which correspond to some such $h_{t,i}$ function 
within each block $B_{t}$, but potentially using a different bit pattern $i-(t-1)m$ for each $t$.
Formally, for any
$i_{1},\ldots,i_{k} \in \nats \cup \{0\}$ with $i_{t} \in B_{t}$ (for each $t \in \{1,\ldots,k\}$),
and any $y_{1},\ldots,y_{k} \in \Y$,
define $\recon(\{(x_{i_{1}},y_{1}),\ldots,(x_{i_{k}},y_{k})\}) = \tilde{h}_{i_{1},\ldots,i_{k}}$, 
where $\tilde{h}_{i_{1},\ldots,i_{k}} : \X \to \Y$ is any function satisfying the property that
each $t \in \{1,\ldots,k\}$ and $j \in \{(t-1)m,\ldots,tm-1\}$ has 
$\tilde{h}_{i_{1},\ldots,i_{k}}(x_{j}) = h_{t,i_{t}}(x_{j})$: that is, the points $x_{i_{t}}$ in the compression set 
are interpreted by the compression scheme as encoding the desired \emph{label sequence} for sub-blocks $B_{ts}$ in the \emph{bits} of $i_{t}-(t-1)m$.
For our purposes, $\tilde{h}_{i_{1},\ldots,i_{k}}(x)$ may be defined arbitrarily for $x \in \X \setminus \{x_{0},\ldots,x_{km-1}\}$.
Note that $\recon(\{(x_{i_{1}},y_{1}),\ldots,(x_{i_{k}},y_{k})\})$ is invariant to the $y_{1},\ldots,y_{k}$ values, 
so for brevity we will drop the $y_{i}$ arguments and simply write $\recon(\{x_{i_{1}},\ldots,x_{i_{k}}\})$
(this is often referred to as an \emph{unlabeled} compression scheme in the literature).
For completeness, $\recon(S)$ should also be defined for sets $S \subseteq \X$ of size at most $k$ 
that do not have exactly one element $x_{i}$ with $i \in B_{t}$ for every $t$; 
for our purposes, let us suppose that in these cases, for every $t$ with 
$S \cap \{x_{i} : i \in B_{t}\} \neq \emptyset$, let $i_{t} = \min\{ i \in B_{t} : x_{i} \in S \}$, 
and for every $t$ with $S \cap \{x_{i} : i \in B_{t}\} = \emptyset$, let $i_{t} = (t-1)m$;
then define $\recon(S) = \tilde{h}_{i_{1},\ldots,i_{k}}$.  In this way, $\recon(S)$ is defined for 
all $S \subseteq \X$ with $|S| \leq k$.

Now define a family of distributions $P^{(\sigma)}$, $\sigma = \{\sigma_{t,r}\}$, 
with $\sigma_{t,r} \in \{-1,1\}$ for $t \in \{1,\ldots,k\}$ and $r \in \{0,\ldots,\log_{2}(m)-1\}$, as follows.
Every $P^{(\sigma)}$ has marginal $P_{X}$ on $\X$ uniform on $x_{0},\ldots,x_{km-1}$, 
and for each $j = (t-1)m + (s-1)\log_{2}(m) + r \in B_{ts}$ 
(for $t \in \{1,\ldots,k\}$, $s \in \{1,\ldots, m/\log_{2}(m)\}$, and $r \in \{0,\ldots,\log_{2}(m)-1\}$)
set $P^{(\sigma)}(Y=1|X=x_{j}) = \frac{1}{2} + \frac{\epsilon}{2} \sigma_{t,r}$,
where 
\begin{equation*}
\epsilon = \sqrt{\frac{k \log_{2}(m)}{n}}.
\end{equation*}
Now let us suppose $\sigma$ is chosen \emph{randomly}, with $\sigma_{t,r}$ independent ${\rm Uniform}(\{-1,1\})$.
Then (since max $\geq$ average) note that choosing $P = P^{(\sigma)}$ 
now results in 
\begin{equation*}
\AgRate(n,k) \geq \E\!\left[ \inf_{\comp} \AgRate(n,k,\recon,\comp,P^{(\sigma)}) \right],
\end{equation*}
so that it suffices to study the expectation on the right hand side.

As mentioned, the purpose of this construction is to create an embedded instance of a scenario 
that witnesses the lower bound for agnostic learning in VC classes, where the VC dimension 
of the embedded scenario here is $k \log_{2}(m)$.  Specifically, in our construction, for any $t \in \{1,\ldots,k\}$ 
and $r \in \{0,\ldots,\log_{2}(m)-1\}$, denoting by 
\begin{equation*}
C_{t,r} = \{ (t-1)m + (s-1)\log_{2}(m) + r : s \in \{1,\ldots,m/\log_{2}(m)\}\},
\end{equation*}
the locations $\{ x_{j} : j \in C_{t,r} \}$ 
together essentially represent a single location in the embedded problem: that is, 
their $h_{t,i}(x_{j})$ values are bound together, as are their $P(Y=1|X=x_{j})$ values.
However, this itself is not sufficient to supply a lower bound, since the constructed scenario 
exists only in the \emph{complete} space of possible reconstructions 
$\H_{k,\recon}^{*} = \{ \recon(\{x_{i_{1}},\ldots,x_{i_{k}}\}) : i_{1},\ldots,i_{k} \in \{0,\ldots,km-1\}\}$, 
and it is entirely possible that
$\min_{h \in \H_{k,\recon}(Z_{[n]})} R(h;P) > \min_{h \in \H_{k,\recon}^{*}} R(h;P)$: 
that is, the smallest error rate achievable in $\H_{k,\recon}(Z_{[n]})$ can conceivably be 
significantly larger than the smallest error rate achievable in the embedded VC class, 
so that compression schemes in this scenario do not automatically inherit the lower bounds for the constructed VC class.
To account for this, we will study a decomposition of the construction into $k$ subproblems, 
corresponding to the $k$ blocks $B_{t}$ in the construction, and we will 
argue that within these subproblems there remains in $\H_{k,\recon}(Z_{[n]})$ a function 
with optimal predictions on \emph{most} of the points, and then stitch these functions 
together to argue that there do exist functions in $\H_{k,\recon}(Z_{[n]})$ having near-optimal 
error rates relative to the best in $\H_{k,\recon}^{*}$.

Specifically, fix any $t \in \{1,\ldots,k\}$ and let $P_{t}^{(\sigma)}$ denote the conditional distribution of $(X,Y) \sim P^{(\sigma)}$ 
given $\sigma$ and the event that $X \in \{ x_{j} : j \in B_{t} \}$.  Also denote 
$\H_{t}^{*} = \{ h_{t,i} : i \in B_{t} \}$, 
$i_{t}^{*} = \argmin_{i \in B_{t}} R(h_{t,i};P_{t}^{(\sigma)})$,
$h_{t}^{*} = h_{t,i_{t}^{*}}$, 
and 
\begin{equation*}
\H_{t}(Z_{[n]}) = \{ h_{t,i} : i \in B_{t}, x_{i} \in \{x_{(t-1)m},X_{1},\ldots,X_{n}\} \}.
\end{equation*}
These correspond to the classifications of block $t$ realizable by classifiers in $\H_{k,\recon}(Z_{[n]})$ 
(where the addition of the $x_{(t-1)m}$ point to the data set is due to our specification of $\recon(S)$ for 
sets $S$ that contain no elements $x_{i}$ with $i \in B_{t}$, so that classifying block $t$ according to $h_{t,(t-1)m}$ 
is always possible).
There are now two components at this stage in the argument: 
first, that any compression function $\comp$ 
results in $\hat{h} = \recon(\comp(Z_{[n]}))$ with 
$\E[ R(\hat{h};P_{t}^{(\sigma)}) - R(h_{t}^{*};P_{t}^{(\sigma)}) ] \geq \epsilon/(8 e^{4})$, 
and second, that 
$\E[ \min_{h \in \H_{t}(Z_{[n]})} R(h;P_{t}^{(\sigma)}) - R(h_{t}^{*};P_{t}^{(\sigma)}) ] \leq \epsilon / (16 e^{4})$.

For the first part, note that for any $r \in \{0,\ldots,\log_{2}(m)-1\}$, 
for any $j \in C_{t,r}$, $h_{t}^{*}(x_{j}) = \frac{\sigma_{t,r}+1}{2}$.
Furthermore, for any compression function $\comp$, 
note that any $\hat{h}$ that $\recon(\comp(Z_{[n]}))$ is capable of producing 
has $\hat{h}(x_{j}) = \hat{h}(x_{j^{\prime}})$ 
for every $j,j^{\prime} \in C_{t,r}$.
In particular, if we let $\hat{i}_{t} \in B_{t}$ be the index with 
$b_{r}(\hat{i}_{t}-(t-1)m) = \hat{h}(x_{(t-1)m+r})$ for every $r \in \{0,\ldots,\log_{2}(m)-1\}$, 
then $\hat{h}$ and $h_{t,\hat{i}_{t}}$ agree on every element of $\{x_{j} : j \in B_{t}\}$.
This also implies 
\begin{align*}
R(\hat{h};P_{t}^{(\sigma)}) - R(h_{t}^{*};P_{t}^{(\sigma)})
& = R(h_{t,\hat{i}_{t}};P_{t}^{(\sigma)}) - R(h_{t}^{*};P_{t}^{(\sigma)})
\\ & = \frac{1}{\log_{2}(m)} \sum_{r=0}^{\log_{2}(m)-1} \epsilon \ind\!\left[ b_{r}(\hat{i}_{t}-(t-1)m) \neq \frac{\sigma_{t,r}+1}{2} \right].
\end{align*}
Therefore, denoting by $n_{t,r} = |\{ i \leq n : X_{i} \in \{ x_{j} : j \in C_{t,r} \} \}|$, we have 
\begin{equation*}
\E[ R(\hat{h};P_{t}^{(\sigma)}) - R(h_{t}^{*};P_{t}^{(\sigma)}) ] 
= \frac{\epsilon}{\log_{2}(m)} \!\!\!\sum_{r=0}^{\log_{2}(m)-1}\!\!\! \E\!\left[ \P\!\left( b_{r}(\hat{i}_{t}-(t-1)m) \neq \frac{\sigma_{t,r}+1}{2} \middle| n_{t,r} \right) \right].
\end{equation*}
For any given $r \in \{0,\ldots,\log_{2}(m)-1\}$, 
enumerate the $n_{t,r}$ random variables $(X_{i},Y_{i})$ with $X_{i} \in \{ x_{j} : j \in C_{t,r} \}$ as $(X_{i(r,1)},Y_{i(r,1)}),\ldots,(X_{i(r,n_{t,r})},Y_{i(r,n_{t,r})})$,
and note that given $n_{t,r}$, the values $(Y_{i(r,1)},\ldots,Y_{i(r,n_{t,r})})$ are a \emph{sufficient statistic} for $\sigma_{t,r}$ (see Definition 2.4 of \cite{schervish:95}), 
and therefore (see Theorem 3.18 of \cite{schervish:95}) there exists a (randomized) decision rule $\hat{f}_{t,r}(Y_{i(r,1)},\ldots,Y_{i(r,n_{t,r})})$ depending only on these variables 
and independent random bits such that
\begin{equation*}
\P\!\left( b_{r}(\hat{i}_{t}-(t-1)m) \neq \frac{\sigma_{t,r}+1}{2} \middle| n_{t,r} \right)
= \P\!\left( \hat{f}_{t,r}(Y_{i(r,1)},\ldots,Y_{i(r,n_{t,r})}) \neq \frac{\sigma_{t,r}+1}{2} \middle| n_{t,r} \right).
\end{equation*}
Furthermore, by Lemma 5.1 of \cite{MR1741038}\footnote{
  The lower bound in \cite[Lemma 5.1]{MR1741038} relied on Slud's lemma;
  the analysis has since been tightened to yield asymptotically optimal lower bounds
  \citep{DBLP:journals/corr/KontorovichP16}.
  }, we have 
\begin{equation*}
\P\!\left( \hat{f}_{t,r}(Y_{i(r,1)},\ldots,Y_{i(r,n_{t,r})}) \neq \frac{\sigma_{t,r}+1}{2} \middle| n_{t,r} \right) 
> \frac{1}{8e} \exp\!\left\{ - (8/3) n_{t,r} \epsilon^{2} \right\}.
\end{equation*}
Altogether, and combined with Jensen's inequality, we have that 
\begin{align*}
& \E[ R(\hat{h};P_{t}^{(\sigma)}) - R(h_{t}^{*};P_{t}^{(\sigma)}) ] 
\\ & \geq \frac{\epsilon}{8e\log_{2}(m)} \sum_{r=0}^{\log_{2}(m)-1} \E\!\left[ \exp\!\left\{ - (8/3) n_{t,r} \epsilon^{2} \right\} \right]
\geq \frac{\epsilon}{8e\log_{2}(m)} \sum_{r=0}^{\log_{2}(m)-1} \exp\!\left\{ - (8/3) \E[ n_{t,r} ] \epsilon^{2} \right\}
\\ & = \frac{\epsilon}{8e\log_{2}(m)} \sum_{r=0}^{\log_{2}(m)-1} \exp\!\left\{ - (8/3) \frac{n}{k\log_{2}(m)} \epsilon^{2} \right\}
\geq \frac{\epsilon}{8e\log_{2}(m)} \sum_{r=0}^{\log_{2}(m)-1} e^{ - (8/3) }
\geq \frac{\epsilon}{8 e^{4}}.
\end{align*}

Now for the second part, for any $x \in \{x_{i} : i \in \{0,\ldots,km-1\}\}$, denote by $I(x)$ the index $i$ such that $x = x_{i}$.
Note that an $i$ for which $h_{t,i}$ has minimal $R(h_{t,i};P_{t}^{(\sigma)})$ among all $h_{t,i^{\prime}} \in \H_{t}(Z_{[n]})$
can equivalently be defined as an $i$ 
with minimal $\sum_{j=0}^{\log_{2}(m)-1} \ind[ b_{j}(i-(t-1)m) \neq b_{j}(i_{t}^{*}-(t-1)m) ]$ 
among all $i^{\prime} \in B_{t} \cap \{ I(X_{1}),\ldots,I(X_{n}),(t-1)m \}$, 
and furthermore, for such an $i$, 
\begin{equation*}
R(h_{t,i};P_{t}^{(\sigma)}) - R(h_{t}^{*};P_{t}^{(\sigma)}) 
= \frac{\epsilon}{\log_{2}(m)} \sum_{j=0}^{\log_{2}(m)-1} \ind[ b_{j}(i-(t-1)m) \neq b_{j}(i_{t}^{*}-(t-1)m) ].
\end{equation*}
For any $i \in B_{t}$, denote 
\begin{equation*}
\Delta_{t}(i) = \sum_{j=0}^{\log_{2}(m)-1} \ind[ b_{j}(i-(t-1)m) \neq b_{j}(i_{t}^{*}-(t-1)m) ].
\end{equation*}
Thus, it suffices to establish the stated upper bound for the quantity
\begin{equation*}
\frac{\epsilon}{\log_{2}(m)} \E\!\left[ \min_{i \in B_{t} \cap \{ I(X_{1}),\ldots,I(X_{n}),(t-1)m \}} \Delta_{t}(i) \right].
\end{equation*}
Now consider a random variable $X \sim P_{X}(\cdot | \{x_{i} : i \in B_{t})$: 
that is, $X$ has distribution the same as the marginal of $P_{t}^{(\sigma)}$ on $\X$.
Then note that the conditional distribution of $\Delta_{t}(I(X))$ given $\sigma$ 
is ${\rm Binomial}(\log_{2}(m),\frac{1}{2})$.
Let $q = 16 e^{4}$, and suppose the numerical constant $c$ is sufficiently large so that $q \leq (1/2)\log_{2}(m)$.
Then we have 
\begin{align*}
& \P\!\left( \Delta_{t}(I(X)) \leq \frac{1}{2q}\log_{2}(m) \middle| \sigma \right) 
= \sum_{\ell=0}^{ \lfloor (1/2q) \log_{2}(m) \rfloor } \binom{\log_{2}(m)}{\ell} \frac{1}{m} 
\\ & \geq \frac{1}{m} \left( \frac{\log_{2}(m)}{\lfloor (1/2q) \log_{2}(m) \rfloor} \right)^{\lfloor (1/2q) \log_{2}(m) \rfloor}
\geq \frac{1}{m} ( 4q )^{(1/2q) \log_{2}(m)}
= m^{(1/2q)\log_{2}(4q) - 1}.
\end{align*}
Thus, by independence of the samples $X_{1},\ldots,X_{n}$, 
denoting $n_{t} = |\{ i \leq n : X_{i} \in \{ x_{j} : j \in B_{t} \} \}|$, 
we have 
\begin{align*}
& \P\!\left( \min_{i \in B_{t} \cap \{I(X_{1}),\ldots,I(X_{n}),(t-1)m\}} \Delta_{t}(i) > \frac{1}{2q}\log_{2}(m) \middle| \sigma, n_{t} \right) 
\\ & \leq \P\!\left( \forall i \in B_{t} \cap \{I(X_{1}),\ldots,I(X_{n})\}, \Delta_{t}(i) > \frac{1}{2q}\log_{2}(m) \middle| \sigma, n_{t} \right)
\\ & = \P\!\left( \Delta_{t}(I(X)) > \frac{1}{2q}\log_{2}(m) \middle| \sigma \right)^{n_{t}}
\\ & \leq \left( 1 - m^{(1/2q)\log_{2}(4q)-1} \right)^{n_{t}}
\leq  \exp\!\left\{  - m^{(1/2q)\log_{2}(4q)-1} n_{t} \right\}.
\end{align*}
Altogether, by the law of total expectation, 
and using the fact that $R(h;P_{t}^{(\sigma)}) \leq 1$, 
we have established that 
\begin{equation*}
\E\!\left[ \min_{h \in \H_{t}(Z_{[n]})} R(h;P_{t}^{(\sigma)}) - R(h_{t}^{*};P_{t}^{(\sigma)}) \right] 
\leq \frac{\epsilon}{2q} + \E\!\left[ \exp\!\left\{  - m^{(1/2q)\log_{2}(4q)-1} n_{t} \right\} \right].
\end{equation*}
Since $n_{t}$ is a ${\rm Binomial}(n, 1/k)$ random variable, the rightmost term evaluates to the moment generating function of this distribution: 
that is, 
\begin{align*}
& \E\!\left[ \exp\!\left\{  - m^{(1/2q)\log_{2}(4q)-1} n_{t} \right\} \right]
= \left( 1 - \frac{1}{k} + \frac{1}{k} \exp\!\left\{ - m^{(1/2q)\log_{2}(4q)-1} \right\} \right)^{n}
\\ & \leq \max\!\left\{ 2 \left( 1 - \frac{1}{k} \right)^{n}, 2 \left( \frac{1}{k} \right)^{n} \exp\!\left\{ - m^{(1/2q)\log_{2}(4q)-1} n \right\} \right\}
\\ & \leq \max\!\left\{ 2 e^{-n/k}, 2 \exp\!\left\{ - m^{(1/2q)\log_{2}(4q)} \right\} \right\}
\\ & = \max\!\left\{ 2 e^{-n/k}, 2 \left( \exp\!\left\{ - (1/2q)\log_{2}(4q) m^{(1/2q)\log_{2}(4q)} \right\} \right)^{\frac{2q}{\log_{2}(4q)}} \right\}
\\ & \leq \max\!\left\{ 2 e^{-n/k}, 2 \left( \frac{2q}{\log_{2}(4q)} \right)^{\frac{2q}{\log_{2}(4q)}} \frac{1}{m} \right\}.
\end{align*}
Since both of these terms shrink strictly faster than the above specification of $\epsilon$ as a function of $n/k$, 
and therefore, for a sufficiently large choice of the numerical constant $c$, 
both of these terms are smaller than $\frac{\epsilon}{32 e^{4}}$.
Therefore, we conclude that 
\begin{equation*}
\E\!\left[ \min_{h \in \H_{t}(Z_{[n]})} R(h;P_{t}^{(\sigma)}) - R(h_{t}^{*};P_{t}^{(\sigma)}) \right] \leq \frac{\epsilon}{16 e^{4}},
\end{equation*}
as claimed.

Together, these two components imply that
\begin{align*}
& \E\!\left[ R(\hat{h};P_{t}^{(\sigma)}) - \min_{h \in \H_{t}(Z_{[n]})} R(h;P_{t}^{(\sigma)}) \right]
\\ & = \E\!\left[ R(\hat{h};P_{t}^{(\sigma)}) - R(h_{t}^{*};P_{t}^{(\sigma)})\right] - \E\!\left[ \min_{h \in \H_{t}(Z_{[n]})} R(h;P_{t}^{(\sigma)}) - R(h_{t}^{*};P_{t}^{(\sigma)}) \right]
\geq \frac{\epsilon}{16 e^{4}}.
\end{align*}

Finally, it is time to combine these results for the individual $B_{t}$ blocks 
into a global statement about $P^{(\sigma)}$.  In particular, note that any $h$ has 
$R(h;P^{(\sigma)}) = \frac{1}{k} \sum_{t=1}^{k} R(h;P^{(\sigma)}_{t})$.
Also note that any $h$ that $\recon$ is capable of producing from arguments that are subsets of $\{X_{1},\ldots,X_{n}\}$ 
can be represented as $h = \tilde{h}_{i_{1},\ldots,i_{k}}$ for some $i_{1},\ldots,i_{k}$ where 
every $t \in \{1,\ldots,k\}$ has $i_{t} \in B_{t}$ and $x_{i_{t}} \in \{X_{1},\ldots,X_{n},x_{(t-1)m}\}$
(where the addition of the $x_{(t-1)m}$ covers the case that the set does not include any $x_{i}$ with $i \in B_{t}$, as we defined that case above).
Furthermore, every function $\tilde{h}_{i_{1},\ldots,i_{k}}$ with $i_{t}$ values satisfying these conditions \emph{can} be realized by 
$\recon$ using an argument $S$ that is a subset of $\{X_{1},\ldots,X_{n}\}$ of size at most $k$: 
namely, the set $\{ x_{i_{t}} : t \in \{1,\ldots,k\}, i_{t} \neq (t-1)m \} \subseteq \{X_{1},\ldots,X_{n}\}$.
Therefore, 
\begin{align*}
& \min_{h \in \H_{k,\recon}(Z_{[n]})} R(h;P^{(\sigma)}) 
= \min_{\substack{(i_{1},\ldots,i_{k}) \in B_{1} \times \cdots \times B_{k} :\\ \{x_{i_{1}},\ldots,x_{i_{k}}\} \subseteq \{X_{1},\ldots,X_{n}\}\cup\{x_{(t-1)m} : t \leq k\}}} R(\tilde{h}_{i_{1},\ldots,i_{k}};P^{(\sigma)})
\\ & = \min_{\substack{(i_{1},\ldots,i_{k}) \in B_{1} \times \cdots \times B_{k} :\\ \{x_{i_{1}},\ldots,x_{i_{k}}\} \subseteq \{X_{1},\ldots,X_{n}\}\cup\{x_{(t-1)m} : t \leq k\}}} \frac{1}{k} \sum_{t=1}^{k} R(h_{t,i_{t}};P^{(\sigma)}_{t})
\\ & = \frac{1}{k} \sum_{t=1}^{k} \min_{\substack{i_{t} \in B_{t} :\\ x_{i_{t}} \in \{X_{1},\ldots,X_{n},x_{(t-1)m}\}}} R(h_{t,i_{t}};P^{(\sigma)}_{t})
= \frac{1}{k} \sum_{t=1}^{k} \min_{h \in \H_{t}(Z_{[n]})} R(h;P^{(\sigma)}_{t}).
\end{align*}
Thus, for any compression function $\comp$, 
denoting $\hat{h} = \recon(\comp(Z_{[n]}))$, 
\begin{align*}
& \E\!\left[ R(\hat{h};P^{(\sigma)}) - \min_{h \in \H_{k,\recon}(Z_{[n]})} R(h;P^{(\sigma)}) \right] 
\\ & \geq \frac{1}{k} \sum_{t=1}^{k} \E\!\left[ R(\hat{h};P^{(\sigma)}_{t}) - \min_{h \in \H_{t}(Z_{[n]})} R(h;P^{(\sigma)}_{t}) \right]
\geq \frac{1}{16 e^{4}} \epsilon 
\gtrsim \sqrt{\frac{k \log(n/k)}{n}}.
\end{align*}
\end{proof}

\section{Order-Dependent Compression Schemes}
\label{sec:order-dep}

The above construction shows that the well-known $\sqrt{\frac{k \log(n/k)}{n}}$ upper bound for 
agnostic compression schemes is sometimes tight.  Note that, in the definition of agnostic compression schemes, 
we required that the reconstruction function $\recon$ take as input a (multi)set.
This type of compression scheme is often referred to as being \emph{permutation invariant}, 
since the compression set argument is unordered 
(or equivalently $\recon$ does not depend on the order of elements in its argument).

We can also show a related result for the case of \emph{order-dependent} compression schemes.
An order-dependent agnostic sample compression scheme
is specified by a \emph{size} $k \in \nats$ 
and an \emph{order-dependent reconstruction function} $\recon$,
which maps any ordered \emph{sequence} 
$(z_{1},\ldots,z_{k^{\prime}}) \in \Z^{k^{\prime}}$ with $0 \leq k^{\prime} \leq k$
to a measurable function $h : \X \to \Y$.
For any $n \in \nats$ and any sequence $z_{1},\ldots,z_{n}$, 
define 
\begin{equation*}
\H_{k,\recon}(z_{1},\ldots,z_{n}) = \{ \recon( (z_{i_{1}},\ldots,z_{i_{k^{\prime}}}) ) : k^{\prime} \leq k, i_{1},\ldots,i_{k^{\prime}} \in \{1,\ldots,n\} \}.
\end{equation*}

Now for any probability measure $P$ on $\Z$ and any $n \in \nats$, 
continuing the notation from above, 
for any fixed order-dependent agnostic sample compression scheme $(k,\recon)$, 
as above denote 
\begin{equation*}
\UC^{o}(n,k,\recon,P) = \E \sup_{h \in \H_{k,\recon}(Z_{1},\ldots,Z_{n})} | \hat{R}(h;Z_{[n]}) - R(h;P) |,
\end{equation*}
and for any $n,k \in \nats$, define
\begin{equation*}
\UC^{o}(n,k) = \sup_{P, \recon} \UC^{o}(n,k,\recon,P),
\end{equation*}
where $P$ ranges over all probability measures on $\Z$, 
and $\recon$ ranges over all order-dependent reconstruction functions (for the given size $k$).

It is known \citep{DBLP:journals/ml/GraepelHS05} 
that for any $n,k \in \nats$,
\begin{equation*}
\UC^{o}(n,k) \lesssim \sqrt{\frac{k \log(n)}{n}}.
\end{equation*}
In comparison with the above upper bound for permutation-invariant compression schemes, 
this bound has a factor $\log(n)$ in place of $\log(n/k)$.

Similarly, we can also define analogous quantities for agnostic learning excess risk guarantees.
Specifically, in this context, an \emph{ordered} compression function $\comp$ is a mapping 
from any sequence $z_{1},\ldots,z_{n}$ in $\Z$ 
to an \emph{ordered sequence} $S = (z_{i_{1}},\ldots,z_{i_{k^{\prime}}})$ for some $k^{\prime} \leq k$ and $i_{1},\ldots,i_{k^{\prime}} \in \{1,\ldots,n\}$.
Then, denoting $\hat{h}_{n} = \recon(\comp(Z_{[n]}))$, define 
\begin{equation*}
\AgRate^{o}(n,k,\recon,\comp,P) = \E\!\left[ R(\hat{h}_{n};P) - \min_{h \in \H_{k,\recon}(Z_{[n]})} R(h;P) \right]
\end{equation*}
and then define 
\begin{equation*}
\AgRate^{o}(n,k) = \sup_{\recon} \inf_{\comp} \sup_{P} \AgRate(n,k,\recon,\comp,P),
\end{equation*}
where again $P$ ranges over all probability measures on $\Z$
and $\recon$ ranges over all order-dependent reconstruction functions (for the given size $k$), 
and where $\comp$ ranges over all ordered compression functions (for the given size $k$).

By the same standard argument involving empirical risk minimization, it remains true in this context that
\begin{equation}
\label{eqn:ag-vs-uc-ordered}
\AgRate^{o}(n,k) \leq 2\UC^{o}(n,k)
\end{equation}
and an immediate implication is then that any $n,k$ has
\begin{equation*}
\AgRate^{o}(n,k) \lesssim \sqrt{\frac{k \log(n)}{n}}.
\end{equation*}

As above, it is interesting to ask whether the $\log(n)$ factor is required is necessary.
Analogously to the order-invariant compression schemes above, we find that it is indeed necessary, 
as stated in the following theorem.  Note that this lower bound for order-dependent compression schemes 
is slightly larger than that established above for order-independent compression schemes.

\begin{theorem}
\label{thm:ag-lb-order-dep}
For any $n,k \in \nats$ with $|\X| \geq n \geq c k \log(n)$,
\begin{equation*}
\AgRate^{o}(n,k) \gtrsim \sqrt{\frac{k \log(n)}{n}}.
\end{equation*}
\end{theorem}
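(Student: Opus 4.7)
The plan is to modify the construction of Theorem~\ref{thm:ag-lb} to exploit the additional expressivity of order-dependent compression schemes: an ordered $k$-tuple drawn with repetition from $n$ samples takes one of $n^{k}$ values rather than the $\approx\binom{n}{k}$ values available unordered, so the reconstruction can decode roughly $k\log_{2}n$ bits about its output instead of $k\log_{2}(n/k)$. Correspondingly, I embed a VC-style lower-bound scenario of dimension $k\log_{2}n$ on $n$ distinct points of $\X$, rather than on the $km$ points used in Theorem~\ref{thm:ag-lb}.

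Setting $m=\lfloor n/k\rfloor$ and $L=\lfloor\log_{2}n\rfloor$ (the hypothesis $n\geq ck\log n$ gives $m\geq L$ with slack), pick $n$ distinct points $x_{0},\dots,x_{n-1}\in\X$, partition them into $k$ blocks $B_{t}$ of size $m$, and each block into $m/L$ sub-blocks of size $L$. Let $\phi:\{0,\dots,n-1\}\to\{0,1\}^{L}$ be the binary-expansion bijection (identifying $x_{j}$ with $j$). Define an order-dependent reconstruction by
\[
\recon((z_{1},\dots,z_{k}))\;:=\;\tilde h_{\phi(x'_{1}),\dots,\phi(x'_{k})},
\]
where $x'_{t}$ is the $\X$-component of $z_{t}$ and $\tilde h_{\vec b_{1},\dots,\vec b_{k}}$ labels the $r$-th element of each sub-block of $B_{t}$ by the $r$-th coordinate of $\vec b_{t}$; shorter input sequences are padded with a fixed default. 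Take the same adversarial family $P^{(\sigma)}$ as in Theorem~\ref{thm:ag-lb} but with $L$ in place of $\log_{2}m$, and set $\epsilon=\sqrt{kL/n}\asymp\sqrt{k\log n/n}$.

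The two block-wise estimates then port over with essentially the same arguments. Each location $C_{t,r}$ contains $m/L$ points, so $\E[n_{t,r}]=m/L$ and $\E[n_{t,r}]\epsilon^{2}=1$; the per-bit sufficient-statistics and Slud calculation from Theorem~\ref{thm:ag-lb} gives
\[
\E[R(\hat h;P_{t}^{(\sigma)})-R(h_{t}^{*};P_{t}^{(\sigma)})]\;\geq\;\frac{\epsilon}{8e^{4}}.
\]
For the upper bound on $\E[\min_{h\in\H_{t}(Z_{[n]})}R(h;P_{t}^{(\sigma)})-R(h_{t}^{*};P_{t}^{(\sigma)})]$, the new feature is that in the ordered setting the $t$-th position of the compression sequence may be filled by any of the $n$ samples, not merely those falling in $B_{t}$. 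Since $\phi$ is a bijection and $P_{X}$ is uniform on $\{x_{0},\dots,x_{n-1}\}$, each $\phi(X_{i})$ is uniform on $\{0,1\}^{L}$, so its Hamming distance to the target $\phi(i_{t}^{*})$ is Binomial$(L,1/2)$. The Stirling bound from Theorem~\ref{thm:ag-lb} gives $\P(\Delta\leq L/(2q))\geq n^{(1/2q)\log_{2}(4q)-1}$, and the minimum over the $n$ i.i.d.\ samples exceeds $L/(2q)$ with probability at most $\exp(-n^{(1/2q)\log_{2}(4q)})$, which for $q=16e^{4}$ and $c$ large is negligible compared to $\epsilon$, yielding $\E[\min_{h\in\H_{t}(Z_{[n]})}R(h;P_{t}^{(\sigma)})-R(h_{t}^{*};P_{t}^{(\sigma)})]\leq\epsilon/(16e^{4})$.

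Assembly is identical to Theorem~\ref{thm:ag-lb}: $R(h;P^{(\sigma)})=\frac{1}{k}\sum_{t}R(h;P_{t}^{(\sigma)})$, every classifier producible by $\recon$ from a length-$\leq k$ subsequence of $Z_{[n]}$ decomposes block-wise because position $t$ of the ordered compression set influences only $B_{t}$, and the ``max over $\sigma$ exceeds average over $\sigma$'' step then gives $\AgRate^{o}(n,k)\gtrsim\epsilon\asymp\sqrt{k\log n/n}$. The main place where the order-dependent gain is extracted is the upper-bound step: replacing the Binomial$(n,1/k)$ per-block sample pool of Theorem~\ref{thm:ag-lb} by the full pool of $n$ samples is exactly what permits $L=\log_{2}n$ bits per block (instead of $\log_{2}(n/k)$) while the near-optimal bit pattern is still realized by some sample. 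No fundamentally new ideas beyond those in the proof of Theorem~\ref{thm:ag-lb} are required; only the encoding map $\phi$ and the enlarged per-block sample pool change.
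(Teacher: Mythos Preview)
Your proposal is essentially the same construction and argument as the paper's proof. The paper takes $m=2^{\lfloor\log_{2}n\rfloor}$ distinct points (rather than your $n$), partitions them into $k$ blocks of size $m/k$ with sub-blocks of size $\log_{2}m$, and lets the $t$-th position of the ordered compression tuple carry an arbitrary index in $\{0,\dots,m-1\}$ whose binary digits prescribe the labels on block $t$; the two block-wise estimates (the $\epsilon/(8e^{4})$ lower bound via sufficient statistics and Lemma~5.1 of \cite{MR1741038}, and the $\epsilon/(16e^{4})$ upper bound via the ${\rm Binomial}(\log_{2}m,1/2)$ Hamming-distance calculation over all $n$ samples) and the final block-wise assembly are exactly as you describe. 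The only slip in your write-up is the claim that $\phi:\{0,\dots,n-1\}\to\{0,1\}^{L}$ is a bijection and hence that $\phi(X_{i})$ is uniform on $\{0,1\}^{L}$: this requires $n=2^{L}$, which is precisely why the paper works on $m=2^{\lfloor\log_{2}n\rfloor}$ points instead of $n$. With that cosmetic adjustment your argument coincides with the paper's.
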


Together with \eqref{eqn:ag-vs-uc-ordered}, this has the following immediate implication.

\begin{theorem}
\label{thm:uc-lb-order-dep}
For any $n,k \in \nats$ with $|\X| \geq n \geq c k \log(n)$,
\begin{equation*}
\UC^{o}(n,k) \gtrsim \sqrt{\frac{k \log(n)}{n}}.
\end{equation*}
\end{theorem}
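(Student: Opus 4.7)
The plan is to deduce Theorem~\ref{thm:uc-lb-order-dep} as an immediate corollary of Theorem~\ref{thm:ag-lb-order-dep} via the ERM-type inequality \eqref{eqn:ag-vs-uc-ordered}. Concretely, I would first observe that for any fixed $\recon$ and any sample $Z_{[n]}$, one may choose a compression function $\comp$ that minimizes the empirical error $\hat{R}(\recon(\comp(Z_{[n]}));Z_{[n]})$ over all admissible ordered subsequences of size at most $k$. For this particular $\comp$, the standard triangle-inequality argument gives, for $\hat{h}_n = \recon(\comp(Z_{[n]}))$ and any $h^\star \in \H_{k,\recon}(Z_{[n]})$,
\[
R(\hat{h}_n;P) - R(h^\star;P) \leq 2 \sup_{h \in \H_{k,\recon}(Z_{[n]})} |\hat{R}(h;Z_{[n]}) - R(h;P)|,
\]
and minimizing over $h^\star \in \H_{k,\recon}(Z_{[n]})$ on the left and taking expectations justifies the relation $\AgRate^{o}(n,k) \leq 2\UC^{o}(n,k)$ recalled as \eqref{eqn:ag-vs-uc-ordered}.

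Given this, the proof is one line: under the same regime $|\X|\geq n \geq ck\log(n)$ used in Theorem~\ref{thm:ag-lb-order-dep}, one has
\[
\UC^{o}(n,k) \;\geq\; \tfrac{1}{2}\AgRate^{o}(n,k) \;\gtrsim\; \sqrt{\frac{k\log(n)}{n}},
\]
which is exactly the claim. The numerical constants absorbed in $\gtrsim$ are the same as those produced by Theorem~\ref{thm:ag-lb-order-dep}, divided by $2$, so no new constant needs to be tuned.

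The only thing to verify is that the hypothesis $n \geq ck\log(n)$ on the pair $(n,k)$ is identical in both statements, so that we may apply Theorem~\ref{thm:ag-lb-order-dep} with the same numerical constant $c$ as is assumed here. Since that condition is literally inherited, there is essentially no obstacle to the argument. If one wanted a self-contained derivation without passing through the agnostic excess-risk quantity, the main work would be to rerun the construction in the proof of Theorem~\ref{thm:ag-lb-order-dep} and, instead of lower-bounding the excess risk of the ERM over $\H_{k,\recon}(Z_{[n]})$, directly exhibit an $h \in \H_{k,\recon}(Z_{[n]})$ with $|\hat{R}(h;Z_{[n]}) - R(h;P)|$ of order $\sqrt{k\log(n)/n}$; but in view of \eqref{eqn:ag-vs-uc-ordered} this is unnecessary.
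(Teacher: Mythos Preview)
Your proposal is correct and matches the paper's own argument exactly: the paper states Theorem~\ref{thm:uc-lb-order-dep} as an immediate implication of Theorem~\ref{thm:ag-lb-order-dep} via \eqref{eqn:ag-vs-uc-ordered}, with no further work.
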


As above, combining this with the known upper bound, this provides a tight characterization of the worst-case 
rate of uniform convergence for order-dependent agnostic sample compression schemes.

\begin{corollary}
\label{cor:uc-tight-order-dep}
For any $n,k \in \nats$ with $|\X| \geq n \geq c k \log(n)$, 
\begin{equation*}
\AgRate^{o}(n,k) \asymp \sqrt{\frac{k \log(n)}{n}}
\end{equation*}
and
\begin{equation*}
\UC^{o}(n,k) \asymp \sqrt{\frac{k \log(n)}{n}}.
\end{equation*}
\end{corollary}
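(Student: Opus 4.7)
The plan is to assemble Corollary~\ref{cor:uc-tight-order-dep} by sandwiching each of $\AgRate^{o}(n,k)$ and $\UC^{o}(n,k)$ between matching upper and lower bounds, all of which are already stated in the surrounding text; no new analysis is required.

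For the lower directions, I would simply invoke Theorem~\ref{thm:uc-lb-order-dep} to get $\UC^{o}(n,k) \gtrsim \sqrt{k \log(n)/n}$, and Theorem~\ref{thm:ag-lb-order-dep} to get $\AgRate^{o}(n,k) \gtrsim \sqrt{k \log(n)/n}$, in both cases under the standing hypothesis $|\X| \geq n \geq c k \log(n)$ of the corollary. For the matching upper directions, I would appeal to the Graepel--Herbrich--Shawe-Taylor bound recorded in the paragraph preceding Theorem~\ref{thm:ag-lb-order-dep}, namely $\UC^{o}(n,k) \lesssim \sqrt{k \log(n)/n}$, which holds for all $n,k \in \nats$; the matching upper bound on $\AgRate^{o}(n,k)$ then follows at once from inequality~\eqref{eqn:ag-vs-uc-ordered}, losing only a factor of two.

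Combining these matching directions yields the two asymptotic equivalences claimed. There is essentially no obstacle here: all the analytical work is deferred to Theorem~\ref{thm:ag-lb-order-dep}, whose proof will presumably adapt the block-and-sub-block construction of Theorem~\ref{thm:ag-lb} to the ordered setting (where because a compression function selects a \emph{sequence} of up to $k$ points rather than a subset, one expects the embedded VC dimension to scale as $k \log(n)$ instead of $k \log(n/k)$, which also accounts for the slightly stronger hypothesis $n \geq c k \log(n)$). The corollary itself is pure bookkeeping, combining that lower bound with the previously known upper bound.
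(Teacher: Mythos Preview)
Your proposal is correct and matches the paper's approach exactly: the paper also treats Corollary~\ref{cor:uc-tight-order-dep} as an immediate consequence of combining the lower bounds from Theorems~\ref{thm:ag-lb-order-dep} and~\ref{thm:uc-lb-order-dep} with the known Graepel--Herbrich--Shawe-Taylor upper bound and the relation~\eqref{eqn:ag-vs-uc-ordered}, with no separate proof provided.
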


We now present the proof of Theorem~\ref{thm:ag-lb-order-dep}.

\begin{proof}[Proof of Theorem~\ref{thm:ag-lb-order-dep}]
The construction used in this proof is analogous to that from the proof of Theorem~\ref{thm:ag-lb}, 
and in fact is slightly simpler.
Fix any $n,k \in \nats$ with $|\X| \geq n \geq c k \log_{2}(n)$ for a sufficiently large numerical constant $c \geq 4$ (discussed below).
The essential strategy here is the same as in the permutation-invariant compression schemes, 
in that we are constructing an embedded agnostic learning problem for a constructed VC class, 
but in this case the VC dimension will be larger: $k \log_{2}(m)$, with $m \approx n$.
Specifically, let $m = 2^{\lfloor \log_{2}( n ) \rfloor}$, 
and let $x_{0},\ldots,x_{m-1}$ denote any $m$ distinct elements of $\X$.
For simplicity, suppose $\frac{m}{k\log_{2}(m)} \in \nats$ (as before, the argument easily extends to the general case by introducing floor functions, and only the numerical constants change).

We break the space up into \emph{blocks} as before, but now for each $t \in \{1,\ldots,k\}$
we let $B_{t} = \left\{ (t-1) \frac{m}{k}, \ldots, t \frac{m}{k} - 1 \right\}$, 
and for each $s \in \{1, \ldots, m / (k\log_{2}(m))\}$ we define a \emph{sub-block} 
\begin{equation*}
B_{ts} = \left\{ (t-1) \frac{m}{k} + (s-1)\log_{2}(m), \ldots, (t-1) \frac{m}{k} + s \log_{2}(m) - 1 \right\}.
\end{equation*}
Thus, as before, a sub-block consists of $\log_{2}(m)$ indices, 
but now a block only contains ${m}/{k}$ indices, and hence $\frac{m}{k\log_{2}(m)}$ sub-blocks.
Now for $t \in \{1,\ldots,k\}$ and $i \in \{0,\ldots, m-1\}$, define a classifier $h_{t,i} : \X \to \Y$ 
with the property that, $\forall s \in \{1,\ldots, m / (k\log_{2}(m))\}$, 
$\forall r \in \{0,\ldots,\log_{2}(m)-1\}$, for $j = (t-1) \frac{m}{k} + (s-1) \log_{2}(m) + r$, 
\begin{equation*}
h_{t,i}(x_{j}) = b_{r}(i),
\end{equation*}
where as above, $b_{r}(i)$ is the $(r+1)^{{\rm th}}$ bit in the binary representation of $i$: 
i.e., $i = \sum_{\ell = 0}^{\log_{2}(m)-1} b_{\ell}(i) 2^{\ell}$, with $b_{0}(i),\ldots,b_{\log_{2}(m)-1}(i) \in \{0,1\}$.
Thus, the index $i$ encodes the prediction values for the points $\{ x_{\ell} : \ell \in B_{ts} \}$ 
as the bits of $i$; this is slightly different from the $h_{t,i}$ functions we defined above, 
since $i$ is already in $\{0,\ldots,m-1\}$ here, so there is no need to subtract anything from it.

Now we construct a reconstruction function $\recon$ that outputs functions which again 
correspond to some such $h_{t,i}$ function within each block $B_{t}$, 
and which potentially uses a different bit pattern $i$ for each $t$.
Formally, for any 
$i_{1},\ldots,i_{k} \in \{0,\ldots,m-1\}$ 
and any $y_{1},\ldots,y_{k} \in \Y$,
define $\recon(((x_{i_{1}},y_{1}),\ldots,(x_{i_{k}},y_{k}))) = \tilde{h}_{i_{1},\ldots,i_{k}}$, 
where here $\tilde{h}_{i_{1},\ldots,i_{k}} : \X \to \Y$ is any function satisfying the property that
each $t \in \{1,\ldots,k\}$ and $j \in \{(t-1)m,\ldots,tm-1\}$ has 
$\tilde{h}_{i_{1},\ldots,i_{k}}(x_{j}) = h_{t,i_{t}}(x_{j})$: that is, the points $x_{i_{t}}$ in the compression set 
are interpreted by the compression scheme as encoding the desired label sequence for sub-blocks $B_{ts}$ in the \emph{bits} of $i_{t}$.
Note that unlike the order-independent compression scheme construction, we do not require $i_{t}$ to be in block $B_{t}$.
Instead, we are able to distinguish which $i_{t}$ to use to specify the $h_{t,i_{t}}$ sub-predictor for block $B_{t}$ 
simply using the \emph{order} of the sequence $((x_{i_{1}},y_{1}),\ldots,(x_{i_{k}},y_{k}))$.
For our purposes, $\tilde{h}_{i_{1},\ldots,i_{k}}(x)$ may be defined arbitrarily for $x \in \X \setminus \{x_{0},\ldots,x_{m-1}\}$.
Again, since $\recon(((x_{i_{1}},y_{1}),\ldots,(x_{i_{k}},y_{k})))$ is invariant to the $y_{1},\ldots,y_{k}$ values, 
for brevity we will drop the $y_{i}$ arguments and simply write $\recon((x_{i_{1}},\ldots,x_{i_{k}}))$.
For completeness, $\recon(S)$ should also be defined for sequences $S$ of length strictly less than $k$, 
or sequences containing elements not in $\{x_{0},\ldots,x_{m-1}\}$;
for our purposes, in these cases, if $k^{\prime}$ of the elements in $S$ \emph{are} contained in $\{x_{0},\ldots,x_{m-1}\}$, 
then enumerate them as $x_{i_{1}^{\prime}},\ldots,x_{i_{k^{\prime}}^{\prime}}$; then if $k^{\prime} < k$, let $i_{k^{\prime}+1}^{\prime} = \cdots = i_{k}^{\prime} = 0$, 
and finally define the output of $\recon(S)$ as $\tilde{h}_{i_{1}^{\prime},\ldots,i_{k}^{\prime}}$: that is, it interprets the sub-sequence of points in $S$ contained in $\{x_{0},\ldots,x_{\log_{2}(m)-1}\}$ 
as the initial indices $i_{t}$, and fills in the rest of the indices up to $i_{k}$ using $0$'s.

Now define a family of distributions $P^{(\sigma)}$, $\sigma = \{\sigma_{t,r}\}$, 
with $\sigma_{t,r} \in \{-1,1\}$ for $t \in \{1,\ldots,k\}$ and $r \in \{0,\ldots,\log_{2}(m)-1\}$, as follows.
Every $P^{(\sigma)}$ has marginal $P_{X}$ on $\X$ uniform on $x_{0},\ldots,x_{m-1}$, 
and for each $j = (t-1)\frac{m}{k} + (s-1)\log_{2}(m) + r \in B_{ts}$ 
(for $t \in \{1,\ldots,k\}$, $s \in \{1,\ldots, m/(k\log_{2}(m))\}$, and $r \in \{0,\ldots,\log_{2}(m)-1\}$)
set $P^{(\sigma)}(Y=1|X=x_{j}) = \frac{1}{2} + \frac{\epsilon}{2} \sigma_{t,r}$,
where 
\begin{equation*}
\epsilon = \sqrt{\frac{k \log_{2}(m)}{n}}.
\end{equation*}
Now let us suppose $\sigma$ is chosen \emph{randomly}, with $\sigma_{t,r}$ independent ${\rm Uniform}(\{-1,1\})$.
Then 
\begin{equation*}
\AgRate^{o}(n,k) \geq \E\!\left[ \inf_{\comp} \AgRate^{o}(n,k,\recon,\comp,P^{(\sigma)}) \right],
\end{equation*}
so that it suffices to lower-bound the expression on the right hand side.

For any $t \in \{1,\ldots,k\}$ and $r \in \{0,\ldots,\log_{2}(m)-1\}$, denote 
\begin{equation*}
C_{t,r} = \{ (t-1)\frac{m}{k} + (s-1)\log_{2}(m) + r : s \in \{1,\ldots,m/(k\log_{2}(m))\}\}.
\end{equation*}
Also define 
$\H_{k,\recon}^{*} = \{ \recon((x_{i_{1}},\ldots,x_{i_{k}})) : i_{1},\ldots,i_{k} \in \{0,\ldots,m-1\}\}$, 
the space of all possible classifiers $\recon$ can produce.
As before, we are concerned both with constructing a lower bound on the excess risk of $\hat{h} = \recon(\comp(Z_{[n]}))$ 
relative to $\min_{h \in \H_{k,\recon}^{*}} R(h;P^{(\sigma)})$ via a traditional VC lower bound argument, 
and also with upper-bounding $\min_{h \in \H_{k,\recon}(Z_{[n]})} R(h;P^{(\sigma)}) - \min_{h \in \H_{k,\recon}^{*}} R(h;P^{(\sigma)})$, 
so that the lower bound remains nearly valid for the excess risk of $\hat{h}$ relative to classifiers $\recon$ 
can actually produce given sequences within this data set $Z_{[n]}$.

Fix any $t \in \{1,\ldots,k\}$ and let $P_{t}^{(\sigma)}$ denote the conditional distribution of $(X,Y) \sim P^{(\sigma)}$ 
given $\sigma$ and the event that $X \in \{ x_{j} : j \in B_{t} \}$.  Also denote 
$\H_{t}^{*} = \{ h_{t,i} : i \in \{0,\ldots,m-1\} \}$, 
$i_{t}^{*} = \argmin_{i \in \{0,\ldots,m-1\}} R(h_{t,i};P_{t}^{(\sigma)})$
$h_{t}^{*} = h_{t,i_{t}^{*}}$,
and 
\begin{equation*}
\H_{t}(Z_{[n]}) = \{ h_{t,i} : i \in \{0,\ldots,m-1\}, x_{i} \in \{X_{1},\ldots,X_{n}\} \}.
\end{equation*}
As before, we are now interested in proving that any compression function $\comp$ 
results in $\hat{h} = \recon(\comp(Z_{[n]}))$ with 
$\E[ R(\hat{h};P_{t}^{(\sigma)}) - R(h_{t}^{*};P_{t}^{(\sigma)}) ] \geq \epsilon/(8 e^{4})$, 
and also that 
$\E[ \min_{h \in \H_{t}(Z_{[n]})} R(h;P_{t}^{(\sigma)}) - R(h_{t}^{*};P_{t}^{(\sigma)}) ] \leq \epsilon / (16 e^{4})$.

The first part proceeds nearly identically to the corresponding part in the proof of Theorem~\ref{thm:ag-lb}, 
with a few changes needed to convert to this scenario. 
For any $r \in \{0,\ldots,\log_{2}(m)-1\}$, 
for any $j \in C_{t,r}$, note that $h_{t}^{*}(x_{j}) = \frac{\sigma_{t,r}+1}{2}$.
Also, for any compression function $\comp$, 
any $\hat{h}$ that $\recon(\comp(Z_{[n]}))$ is capable of producing 
has $\hat{h}(x_{j}) = \hat{h}(x_{j^{\prime}})$ 
for every $j,j^{\prime} \in C_{t,r}$.
In particular, if we let $\hat{i}_{t} \in \{0,\ldots,m-1\}$ be the index with 
$b_{r}(\hat{i}_{t}) = \hat{h}(x_{(t-1)(m/2)+r})$ for every $r \in \{0,\ldots,\log_{2}(m)-1\}$, 
then $\hat{h}$ and $h_{t,\hat{i}_{t}}$ agree on every element of $\{x_{j} : j \in B_{t}\}$.
This also implies 
\begin{align*}
R(\hat{h};P_{t}^{(\sigma)}) - R(h_{t}^{*};P_{t}^{(\sigma)})
& = R(h_{t,\hat{i}_{t}};P_{t}^{(\sigma)}) - R(h_{t}^{*};P_{t}^{(\sigma)})
\\ & = \frac{1}{\log_{2}(m)} \sum_{r=0}^{\log_{2}(m)-1} \epsilon \ind\!\left[ b_{r}(\hat{i}_{t}) \neq \frac{\sigma_{t,r}+1}{2} \right].
\end{align*}
Therefore, denoting by $n_{t,r} = |\{ i \leq n : X_{i} \in \{ x_{j} : j \in C_{t,r} \} \}|$, we have 
\begin{equation*}
\E[ R(\hat{h};P_{t}^{(\sigma)}) - R(h_{t}^{*};P_{t}^{(\sigma)}) ] 
= \frac{\epsilon}{\log_{2}(m)} \sum_{r=0}^{\log_{2}(m)-1} \E\!\left[ \P\!\left( b_{r}(\hat{i}_{t}) \neq \frac{\sigma_{t,r}+1}{2} \middle| n_{t,r} \right) \right].
\end{equation*}
For any $r \in \{0,\ldots,\log_{2}(m)-1\}$, 
enumerate the $n_{t,r}$ random variables $(X_{i},Y_{i})$ with $X_{i} \in \{ x_{j} : j \in C_{t,r} \}$ as $(X_{i(r,1)},Y_{i(r,1)}),\ldots,(X_{i(r,n_{t,r})},Y_{i(r,n_{t,r})})$,
and note that given $n_{t,r}$, the values $(Y_{i(r,1)},\ldots,Y_{i(r,n_{t,r})})$ are a \emph{sufficient statistic} for $\sigma_{t,r}$ (see Definition 2.4 of \cite{schervish:95}), 
and therefore (see Theorem 3.18 of \cite{schervish:95}) there exists a (randomized) decision rule $\hat{f}_{t,r}(Y_{i(r,1)},\ldots,Y_{i(r,n_{t,r})})$ depending only on these variables 
and independent random bits such that
\begin{equation*}
\P\!\left( b_{r}(\hat{i}_{t}) \neq \frac{\sigma_{t,r}+1}{2} \middle| n_{t,r} \right)
= \P\!\left( \hat{f}_{t,r}(Y_{i(r,1)},\ldots,Y_{i(r,n_{t,r})}) \neq \frac{\sigma_{t,r}+1}{2} \middle| n_{t,r} \right).
\end{equation*}
Furthermore, by Lemma 5.1 of \cite{MR1741038}, we have 
\begin{equation*}
\P\!\left( \hat{f}_{t,r}(Y_{i(r,1)},\ldots,Y_{i(r,n_{t,r})}) \neq \frac{\sigma_{t,r}+1}{2} \middle| n_{t,r} \right) 
> \frac{1}{8e} \exp\!\left\{ - (8/3) n_{t,r} \epsilon^{2} \right\}.
\end{equation*}
Altogether, and combined with Jensen's inequality, we have that 
\begin{align*}
& \E[ R(\hat{h};P_{t}^{(\sigma)}) - R(h_{t}^{*};P_{t}^{(\sigma)}) ] 
\\ & \geq \frac{\epsilon}{8e\log_{2}(m)} \sum_{r=0}^{\log_{2}(m)-1} \E\!\left[ \exp\!\left\{ - (8/3) n_{t,r} \epsilon^{2} \right\} \right]
\geq \frac{\epsilon}{8e\log_{2}(m)} \sum_{r=0}^{\log_{2}(m)-1} \exp\!\left\{ - (8/3) \E[ n_{t,r} ] \epsilon^{2} \right\}
\\ & = \frac{\epsilon}{8e\log_{2}(m)} \sum_{r=0}^{\log_{2}(m)-1} \exp\!\left\{ - (8/3) \frac{n}{k\log_{2}(m)} \epsilon^{2} \right\}
\geq \frac{\epsilon}{8e\log_{2}(m)} \sum_{r=0}^{\log_{2}(m)-1} e^{ - (8/3) }
\geq \frac{\epsilon}{8 e^{4}}.
\end{align*}

Next, continuing on to the second part, 
for any $x \in \{x_{i} : i \in \{0,\ldots,m-1\}\}$, denote by $I(x)$ the index $i$ such that $x = x_{i}$.
Similarly to before, an $i$ for which $h_{t,i}$ has minimal $R(h_{t,i};P_{t}^{(\sigma)})$ among all $h_{t,i^{\prime}} \in \H_{t}(Z_{[n]})$
can equivalently be defined as an $i$ 
with minimal $\sum_{j=0}^{\log_{2}(m)-1} \ind[ b_{j}(i) \neq b_{j}(i_{t}^{*}) ]$ 
among all $i^{\prime} \in \{ I(X_{1}),\ldots,I(X_{n}) \}$, 
and furthermore, for such an $i$, 
\begin{equation*}
R(h_{t,i};P_{t}^{(\sigma)}) - R(h_{t}^{*};P_{t}^{(\sigma)}) 
= \frac{\epsilon}{\log_{2}(m)} \sum_{j=0}^{\log_{2}(m)-1} \ind[ b_{j}(i) \neq b_{j}(i_{t}^{*}) ].
\end{equation*}
For any $i \in \{0,\ldots,m-1\}$, denote 
$\Delta_{t}(i) = \sum_{j=0}^{\log_{2}(m)-1} \ind[ b_{j}(i) \neq b_{j}(i_{t}^{*}) ]$.
It therefore suffices to prove an upper bound for the quantity 
\begin{equation*}
\frac{\epsilon}{\log_{2}(m)} \E\!\left[ \min_{i \in \{ I(X_{1}),\ldots,I(X_{n}) \}} \Delta_{t}(i) \right].
\end{equation*}
Define a random variable $X$ with distribution $P_{X}$ (recalling that this is uniform on $\{x_{0},\ldots,x_{m-1}\}$).
Then the conditional distribution of $\Delta_{t}(I(X))$ given $\sigma$ 
is ${\rm Binomial}(\log_{2}(m),\frac{1}{2})$.
Letting $q = 16 e^{4}$, and supposing $c$ is sufficiently large so that $q \leq (1/2)\log_{2}(m)$, 
following the argument from the analogous step in the proof of Theorem~\ref{thm:ag-lb} 
(where an analysis is given that would apply to \emph{any} ${\rm Binomial}(\log_{2}(m),\frac{1}{2})$ random variable) we have 
\begin{equation*}
\P\!\left( \Delta_{t}(I(X)) \leq \frac{1}{2q}\log_{2}(m) \middle| \sigma \right) \geq m^{(1/2q)\log_{2}(4q) - 1},
\end{equation*}
which implies (still following similar derivations as in the proof of Theorem~\ref{thm:ag-lb}, except with $n_{t}$ replaced by $n$) 
\begin{equation*}
\P\!\left( \min_{i \in \{I(X_{1}),\ldots,I(X_{n})\}} \Delta_{t}(i) > \frac{1}{2q}\log_{2}(m) \middle| \sigma \right) \leq \exp\!\left\{  - m^{(1/2q)\log_{2}(4q)-1} n \right\}.
\end{equation*}
By the law of total expectation and the fact that $R(h;P_{t}^{(\sigma)}) \leq 1$, 
we have 
\begin{equation*}
\E\!\left[ \min_{h \in \H_{t}(Z_{[n]})} R(h;P_{t}^{(\sigma)}) - R(h_{t}^{*};P_{t}^{(\sigma)}) \right] 
\leq \frac{\epsilon}{2q} + \exp\!\left\{  - m^{(1/2q)\log_{2}(4q)-1} n \right\}.
\end{equation*}
Then note that 
\begin{align*}
& \exp\!\left\{  - m^{(1/2q)\log_{2}(4q)-1} n \right\}
\leq \exp\!\left\{  - m^{(1/2q)\log_{2}(4q)} \right\}
\\ & = \left( \exp\!\left\{  - (1/2q)\log_{2}(4q) m^{(1/2q)\log_{2}(4q)} \right\}  \right)^{\frac{2q}{\log_{2}(4q)}}
\\ & \leq \left( \frac{2q}{\log_{2}(4q)} \frac{1}{m^{(1/2q)\log_{2}(4q)}} \right)^{\frac{2q}{\log_{2}(4q)}}
= \left( \frac{2q}{\log_{2}(4q)} \right)^{\frac{2q}{\log_{2}(4q)}} \frac{1}{m}.
\end{align*}
Since this last expression shrinks strictly faster than the above specification of $\epsilon$ as a function of $n/(k\log(n))$, 
we may conclude that for a sufficiently large choice of the numerical constant $c$, 
this expression is smaller than $\frac{\epsilon}{32 e^{4}}$.
Therefore, we conclude that 
\begin{equation*}
\E\!\left[ \min_{h \in \H_{t}(Z_{[n]})} R(h;P_{t}^{(\sigma)}) - R(h_{t}^{*};P_{t}^{(\sigma)}) \right] \leq \frac{\epsilon}{16 e^{4}}.
\end{equation*}

These two parts combine to imply that
\begin{align*}
& \E\!\left[ R(\hat{h};P_{t}^{(\sigma)}) - \min_{h \in \H_{t}(Z_{[n]})} R(h;P_{t}^{(\sigma)}) \right]
\\ & = \E\!\left[ R(\hat{h};P_{t}^{(\sigma)}) - R(h_{t}^{*};P_{t}^{(\sigma)})\right] - \E\!\left[ \min_{h \in \H_{t}(Z_{[n]})} R(h;P_{t}^{(\sigma)}) - R(h_{t}^{*};P_{t}^{(\sigma)}) \right]
\geq \frac{\epsilon}{16 e^{4}}.
\end{align*}

As a final step, we stitch together these lower bounds for the blocks to create a lower bound under the full distribution $P^{(\sigma)}$.
Toward this end, note that any $h$ has 
$R(h;P^{(\sigma)}) = \frac{1}{k} \sum_{t=1}^{k} R(h;P_{t}^{(\sigma)})$.
Also note that, for this reconstruction function $\recon$, 
every $\tilde{h}_{i_{1},\ldots,i_{k}}$ function with $i_{1},\ldots,i_{k} \in \{I(X_{1}),\ldots,I(X_{n})\}$ 
can be produced by $\recon$ using an argument sequence $S$ of at most $k$ elements of $\{X_{1},\ldots,X_{n}\}$: 
namely, $S = (x_{i_{1}},\ldots,x_{i_{k}})$, since each of these $x_{i_{t}}$ are in $\{X_{1},\ldots,X_{n}\}$ due to $i_{t} \in \{I(X_{1}),\ldots,I(X_{n})\}$.
Also note that $R(\tilde{h}_{i_{1},\ldots,i_{k}};P_{t}^{(\sigma)}) = R(h_{t,i_{t}};P_{t}^{(\sigma)})$.
Therefore, 
\begin{align*}
& \min_{h \in \H_{k,\recon}(Z_{[n]})} R(h;P^{(\sigma)}) 
\leq \min_{i_{1},\ldots,i_{k} \in \{I(X_{1}),\ldots,I(X_{n})\}} R(\tilde{h}_{i_{1},\ldots,i_{k}};P^{(\sigma)})
\\ & = \min_{i_{1},\ldots,i_{k} \in \{I(X_{1}),\ldots,I(X_{n})\}} \frac{1}{k} \sum_{t=1}^{k} R(h_{t,i_{t}};P_{t}^{(\sigma)})
\\ & = \frac{1}{k} \sum_{t=1}^{k} \min_{i_{t}  \in \{I(X_{1}),\ldots,I(X_{n})\}} R(h_{t,i_{t}};P^{(\sigma)}_{t})
= \frac{1}{k} \sum_{t=1}^{k} \min_{h \in \H_{t}(Z_{[n]})} R(h;P^{(\sigma)}_{t}).
\end{align*}
Thus, for any compression function $\comp$, 
denoting $\hat{h} = \recon(\comp(Z_{[n]}))$, 
\begin{align*}
& \E\!\left[ R(\hat{h};P^{(\sigma)}) - \min_{h \in \H_{k,\recon}(Z_{[n]})} R(h;P^{(\sigma)}) \right] 
\\ & \geq \frac{1}{k} \sum_{t=1}^{k} \E\!\left[ R(\hat{h};P^{(\sigma)}_{t}) - \min_{h \in \H_{t}(Z_{[n]})} R(h;P_{t}^{(\sigma)}) \right]
\geq \frac{1}{16 e^{4}} \epsilon 
\gtrsim \sqrt{\frac{k \log(n)}{n}}.
\end{align*}
\end{proof}

\bibliographystyle{plainnat}
\bibliography{ourbib}

\end{document}